\documentclass[10pt,final,conf,twocolumn]{IEEEtran}%
\pdfoutput=1
\usepackage{amsmath}
\usepackage{graphicx}
\usepackage{latexsym}
\usepackage{amssymb}
\usepackage{cite}
\usepackage{color}
\usepackage{multirow}
\usepackage{algorithmic,algorithm}
\usepackage{footnote}
\usepackage[utf8]{inputenc}
%\pdfminorversion 4
%\usepackage{caption}
\usepackage{comment}
\usepackage{empheq}

\usepackage{mathtools}

\usepackage{amsthm,thmtools}
\declaretheoremstyle[
headfont=\bfseries,
bodyfont=\normalfont,
]{mydefinition}

\newtheorem{theorem}{Theorem}
\newtheorem{lemma}{Lemma}

\newtheorem{definition}{Definition}		
\newtheorem{example}{Example}
\newtheorem{assumption}{Assumption}
\newtheorem{remark}{Remark}

\usepackage{pifont}
\newcommand{\cmark}{\ding{51}}
\newcommand{\xmark}{\ding{55}}

\def\Tr{{\rm Tr}}

\def\be{\begin{equation}}
	\def\ee{\end{equation}}
\def\bea{\begin{eqnarray}}
	\def\eea{\end{eqnarray}}

\def\caD{{\cal D}}
\def\vecy{{\bf y}}
\def\vecx{{\bf x}}
\def\vecz{{\bf z}}
\def\vecu{{\bf u}}
\def\st{{(t)}}
\def\vecn{{\bf n}}
\def\greF{\nabla{F_k({\bf w}^\st)}}
\def\greFD{\nabla{F({\bf w}^\st, \caD_k)}}
\def\greFDd{\nabla{F({\bf w}^\st,\caD_k')}}
\def\ggreF{\nabla{F({\bf w}^\st)}}
\def\hggreF{\widehat{\nabla F}({\bf w}^\st)}
\def\vecw{{\bf w}} 
\def\losF{F_k({\bf w}^\st)}

\usepackage{mathtools}
\DeclarePairedDelimiter{\diagfences}{(}{)}
\newcommand{\diag}{\operatorname{diag}\diagfences}

%a matrix. The first argument gives the alignation, e.g. cccc or crl.

\usepackage[colorinlistoftodos,prependcaption,backgroundcolor=black!5!white,bordercolor=red]{todonotes}

%draft

%\renewcommand{\label}[1]{\mbox{\footnotesize{\qquad #1}}\label{#1}}

%}}}
%{{{ Title page

\usepackage{makecell}

\begin{document}
	\title{Over-the-Air Federated Learning with Privacy Protection via Correlated Additive Perturbations} 
	\author{
		\IEEEauthorblockN{Jialing Liao, Zheng Chen, and Erik G. Larsson} \\
		\IEEEauthorblockA{Department of Electrical Engineering (ISY), Link\"{o}ping University, Link\"{o}ping, Sweden\\
			Email: \{jialing.liao, zheng.chen, erik.g.larsson\}@liu.se} 
		\thanks{This work is supported by Security Link, ELLIIT, and the KAW foundation.}      
	}

	\maketitle
	%\vspace*{-3em} ggggg
	
	\begin{abstract}%\label{sec:abs}
		In this paper, we consider privacy aspects of wireless federated learning (FL) with Over-the-Air (OtA) transmission of gradient updates from multiple users/agents to an edge server.
		%By exploiting the waveform superposition property of multiple access channels, 
		OtA FL enables the users to transmit their updates simultaneously with linear processing techniques, which improves resource efficiency. 
		However, this setting is vulnerable to privacy leakage since an adversary node can hear directly the uncoded message. 
		Traditional perturbation-based methods provide privacy protection while sacrificing the training accuracy due to the reduced signal-to-noise ratio. In this work, we aim at minimizing privacy leakage to the adversary and the degradation of model accuracy at the edge server at the same time. More explicitly, spatially correlated perturbations are added to the gradient vectors at the users before transmission. Using the zero-sum property of the correlated perturbations, the side effect of the added perturbation on the aggregated gradients at the edge server can be minimized. In the meanwhile, the added perturbation will not be canceled out at the adversary, which prevents privacy leakage. 
		%Differential privacy is used to quantify privacy leakage.
		Theoretical analysis of the perturbation covariance matrix, differential privacy, and model convergence is provided, based on which an optimization problem is formulated to jointly design the covariance matrix and the power scaling factor to balance between privacy protection and convergence performance. Simulation results validate the correlated perturbation approach can provide strong defense ability while guaranteeing high learning accuracy.  
	\end{abstract}
	
	%	\begin{IEEEkeywords}
		%		Federated learning, Over-the-Air computation, differential privacy, correlated perturbations, power control.  
		%	\end{IEEEkeywords}

	\section{Introduction}\label{sec:pcn-intro}
	
	As one instance of distributed machine learning, federated learning (FL) was developed by Google in 2016, where the clients can train a model collaboratively by exchanging local gradients or parameters instead of raw data \cite{konevcny2016FL}. Research activities on FL over wireless networks have attracted wide attention from various perspectives, such as communication and energy efficiency, privacy and security issues etc \cite{ZhuOtAmag20,HuiPan21EIsurv}. 
	
	Communication efficiency is an important design aspect of wireless FL schemes due to the need of data aggregation over a large set of distributed nodes with limited communication resources. 
	%To ensure successful decoding of the local gradients of different clients, unicast transmission based on the orthogonal spectrum or time allocation was used, which resulted in high spectrum and latency costs. 
	Recently, Over-the-Air (OtA) computation has been applied for model aggregation in wireless FL by exploiting the waveform superposition property of multiple-access channels \cite{ZhuOtA20, sery2020over}. Under OtA FL, edge devices can transmit local gradients or parameters simultaneously, which is more resource-efficient than traditional orthogonal multiple access schemes. 
	%Regarding OtA FL, lots of works have been devoted to tackling various challenges varying from optimizing wireless resource allocation, handling the statistic heterogeneity of wireless systems, to compressing the model dimension. To name some, heterogeneous datasets were discussed in \cite{sery2020over}, and authors of \cite{yang2020FL} jointly designed the beamforming and user scheduling to improve the performance of OtA FL. The optimization framework of adaptive power control achieving efficiency and accuracy trade-off was presented in \cite{cao2020pc}. %while a hierarchical FL framework based on user clustering was developed in \cite{WeiTwo-Tier22}.
	%Besides, channel estimation, gradient quantization and sparsification, Byzantine resilience, and privacy protection in OtA FL were discussed in \cite{Deniz21blind, BurakDeniz2021,Osvaldo21}.  
	
	Despite the extensive research on wireless FL, recent works have shown that traditional FL schemes are still vulnerable to inference attacks on local updates to recover local training data \cite{zhu2019deep, nasr19leakage}. %For instance, authors in \cite{zhu2019deep} proved that by iteratively sending dummy data and revising the original dataset estimation according to the difference between the original gradients and the calculated dummy gradients, the curious nodes can infer the private information of the datasets.
	One solution is to reduce information disclosure, which motivates the usage of compression methods such as dropout, selective gradients sharing, and dimensionality reduction \cite{wager2013dropout,shokri2015privacy,fu2019attack}, with the drawbacks of limited defense ability and no accuracy guarantee. Other cryptography technologies, such as secure multi-party computation and homomorphic encryption \cite{wang2016MPC,aono2017privacy} can provide strong privacy guarantees, but yield more computation and communication costs while being hard to implement in practice. Due to easy implementation and high efficiency, perturbation methods such as differential privacy (DP) \cite{dwork2014algorithmic} or CountSketch  matrix \cite{rothchild2020fetchsgd} have been developed. DP technique can effectively quantify the difference in output caused by the change in individual data and reduce information disclosure by adding noise that follows some distributions (e.g., Gaussian, Laplacian, Binomial) \cite{dwork2014algorithmic,agarwal2018cpsgd}. In the context of FL, one can use two DP variants by transmitting perturbed local updates or global updates, i.e., Local DP and Central DP \cite{kairouz14LDP}. %In LDP, clients add noises before sending updates to protect local gradients against curious servers or adversaries while in CDP the server adds perturbation to model updates for secure aggregation against curious clients or adversaries. 
	However, DP-based methods fail to achieve high learning accuracy and defense ability at the same time due to the reduction of signal-to-noise ratio (SNR), which ultimately limits their application.  
	
	To address this issue, in this paper, we design an efficient perturbation method for OtA FL with strong defense ability without significantly compromising the learning accuracy. Unlike the traditional DP method by adding uncorrelated noise, we add spatially correlated perturbations to local updates at different users/agents. We let the perturbations from different users sum to zero at the edge server such that the learning accuracy is not compromised (with only slightly decreased SNR due to less power for actual data transmission). On the other hand, the perturbations still exist at the adversary due to the misalignment between the intended channel and the eavesdropping channel, which can prevent privacy leakage. 
	%Meanwhile, OtA computation is utilized to improve resource efficiency, and DP is utilized to quantify privacy leakage. 
	%We first discuss the generation of correlated perturbations, then provide analysis on the privacy level and convergence performance, then optimize the perturbation covariance matrix and power scaling to minimize the optimality gap subject to the transmitted power constraints and privacy constraint. 
	
	\subsection {Related Work}
	%In this section, the works of particular relevance to our work are discussed \cite{Kang20, BurakDeniz2021, seif2020wireless, Osvaldo21, Shi22Ris,Xue20accuracyLossless,bonawitz2017,Avestimehr21,he2018privacy} which addressed related topics including DP for OtA FL, and the concept of \textit{sum-to-zero} perturbation.
	
	The authors in \cite{Kang20} developed a hybrid privacy-preserving FL scheme by adding perturbations to both local gradients and model updates to defend against inference attacks. 
	%Random user scheduling was considered to deal with limited wireless resources. %that failed to support the participation of all devices in every integration round. 
	In \cite{BurakDeniz2021} the client anonymity in OtA FL was exploited by randomly sampling the devices participating and distributing the perturbation generation across clients to ensure privacy resilience against the failure of clients. Without adversaries but with a curious server, the trade-offs between learning accuracy, privacy, and wireless resources were discussed in \cite{seif2020wireless}. Later on, authors of \cite{Osvaldo21} developed a privacy-preserving FL scheme under orthogonal multiple access (OMA) and OtA, respectively, proving that the inherent anonymity of OtA channels can hide local updates to ensure high privacy. This framework was extended to a reconfigurable intelligent surface (RIS)-enabled OtA FL system by exploiting the channel reconfigurability with RIS \cite{Shi22Ris}. %and waveform superposition property with OtA \cite{Shi22Ris}. %The RIS phase shift matrix and power allocation are jointly optimized to achieve trade-offs among accuracy, privacy, and power constraints. 
	However, the aforementioned approaches reduce privacy leakage at the cost of degrading learning accuracy.  
	
	To this end, authors in \cite{Xue20accuracyLossless} developed a server-aware perturbation method where the server can eliminate the perturbations before aggregation, which requires extra processing and coordination. A more efficient way to balance accuracy and privacy is to guarantee that the inserted perturbations add up to zero. To the best of our knowledge, this strategy has not been explored in wireless FL, although similar ideas exist in the literature of consensus and secure sharing domains. For instance, pair-wise secure keys were exploited in \cite{bonawitz2017} where each user masked its local update via random keys assigned in pairs with opposite signs such that the keys add up to zero. In \cite{he2018privacy}, the perturbation was generated temporally correlated with a geometrically decreasing variance over iterations such that the perturbation adds up to zero after multiple iterations. Compared with these methods, we provide fundamental analysis of general spatially correlated perturbations based on covariance matrix rather than a special case mentioned in \cite{bonawitz2017}. Though the privacy analysis is discussed in the context of the Gaussian mechanism, extensions to other distributions are possible.  
	
	\if 
	A table summarizing related work and ... is provided below. \footnote{Here the privacy attack refers to inference attacks launched by an honest-but-curious adversary (ADV), server (SRV), or user equipment (UE). UL and DL denote uplink and downlink respectively.}

	\setlength{\tabcolsep}{.075mm}{
		\begin{table}[htbp] 
			\begin{center} \caption{Taxonomy of related work on Privacy-Preserving FL}   
				\begin{tabular}{|c|c|c|c|c|c|}   
					\hline Ref. & Net.  & Attacker & OtA & Privacy & Key idea \\  
					\hline \cite{Kang20} & \makecell[c]{   UL\\ DL} & ADV & \xmark & \makecell[c]{Uncorrelated \\ DP} & \makecell[c]{Random user scheduling \\ Effectiveness} \\
					\hline \cite{BurakDeniz2021} &  UL & SRV & \cmark & \makecell[c]{Uncorrelated \\ R{\'e}nyi DP} & \makecell[c]{Subsampling devices and datasets  \\
						Resilience} \\ 
					\hline \cite{seif2020wireless} &  UL & SRV &  \xmark & \makecell[c]{Uncorrelated \\ DP}  & \makecell[c]{Performance analysis  \\  privacy-accuracy-power tradeoffs} \\ 
					\hline \makecell[c]{\cite{Osvaldo21} \\ \cite{Shi22Ris}}& UL & SRV & \cmark & \makecell[c]{Uncorrelated\\ DP}  &  \makecell[c]{Power control \scriptsize{(RIS phase shift design)} \\ privacy-accuracy-power tradeoffs} \\
					\hline
					\cite{Xue20accuracyLossless}  &  \makecell[c]{UL\\ DL} & \makecell[c]{SRV \\ UE} & \xmark &  \makecell[c]{Server aware  \\ random number} & \makecell[c]{Noise elimination \\ accuracy-privacy tradeoff} \\
					\hline
					\makecell[c]{\cite{bonawitz2017} \\ \cite{Avestimehr21}} & UL & ADV & \xmark &  \makecell[c]{Pair-wise \\ secure key} & \makecell[c]{Secure sharing, user scheduling\\ Byzantine-Resilience} \\
					\hline
					\cite{he2018privacy} & UL & ADV &  \xmark & \scriptsize{\makecell[c]{Temporal correlation \\ ($\alpha, \beta$)-privacy}} & \makecell[c]{Privacy-Aware average consensus \\ noise distribution design} \\
					\hline \makecell[c]{Our\\ work} & UL & SRV & \cmark & \makecell[c]{Spatial  \\ correlation, DP}  & \makecell[c]{Noise covariance matrix design \\ power control\\ privacy-accuracy-power tradeoffs}\\  
					\hline   
				\end{tabular}  
			\end{center}   
		\end{table}
	}

	\subsection {Contributions}
	The contributions of this paper are summarized below:
	\begin{itemize}
		\item We develop an OtA FL system achieving trade-offs among resource efficiency, privacy and learning accuracy by exploiting the
		waveform superposition property of the wireless channels, DP mechanism and sum-to-zero correlated perturbations. 
		\item Fundamental analysis of correlated perturbation generation based on covariance matrix is provided, as well as the privacy analysis of DP and the convergence behavior.
		\item A simple approach to generate perturbation covariance matrices and perturbation realizations is developed. 
		\item We formulate an optimization framework that minimizes the optimality gap subject to privacy and transmit power constraints by jointly designing the perturbation covariance matrix and the transmit power scaling factor. 
		\item Simulation results are presented to validate our algorithm and demonstrate the advantages of leveraging correlated noises and OtA to provide efficiency, privacy and accuracy guarantees.
	\end{itemize}
	
	%\subsection {Organization} 
	%The remainder of this paper is organized as follows. 

	\subsection {Notation} 
	Scalar, vector, matrix and set variables are denoted by $x$, ${\bf x}$, ${\bf X}$ and ${\cal X}$ respectively. For any matrix ${\bf X}$, operators ${\bf X}^T, {\bf X}^H, \Tr({\bf X})$, and $\mathbb{E}({\bf X})$ denote the transpose, Hermitian transpose, trace and statistical expectation, respectively. The notation $\lvert {\cal X} \rvert$ denotes the size of a set ${\cal X}$, and $\lVert {\bf x} \rVert$ denotes the Euclidean norm of a vector ${\bf x}$. 
	%${\cal S}({\mathbb R}^n)$, ${\cal S}^+({\mathbb R}^n)$, and ${\cal S}^{++}({\mathbb R}^n)$ denote the sets of $n \times n$ symmetric matrices, symmetric positive semi-definite matrices and symmetric positive definite matrices, respectively.   
	\fi

	\section{System Model}\label{sec:sys}
	
	\begin{figure}[t] 
		\centering
		\includegraphics[width=7.cm]{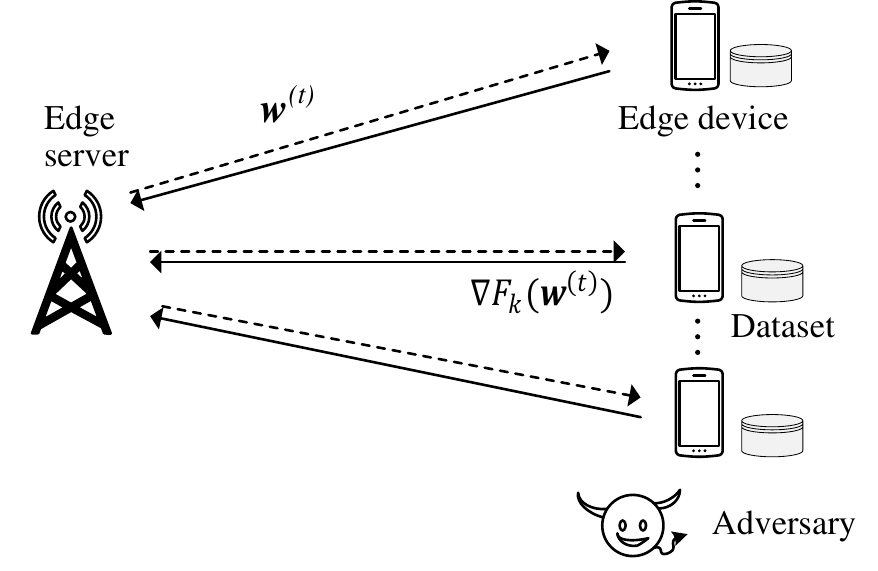}
		% \vspace{-3mm}
		\caption{A federated edge learning system with an adversary that can eavesdrop on the local gradients transmitted from the devices.}\label{pcn:sm}  
		%\vspace{-3mm}
	\end{figure}
	As shown in Fig.~\ref{pcn:sm}, we consider a wireless FL system where $K$ single-antenna devices intend to transmit gradient updates to an edge server with OtA computation. An adversary is located near one of the users, which intends to overhear the transmissions and infer knowledge about the training data. Each user $k \in \{1,2,\dots K\} \triangleq {\cal K}$ has a local dataset $\caD_k=\{(\vecu_i^k, v_i^k)\}_{i=1}^{D_k}$ composed of  $D_k$ data points, where $\vecu_i^k$ is the $i$-th data point and $v_i^k$ is the corresponding label. The global dataset is then denoted by $\caD=\cup_{k=1}^K \caD_k$ with the total size given by $D_{\text{tot}}=\sum_{k=1}^K D_k$. For sake of brevity, we assume that users have equal-sized datasets, i.e., $D_k=D, \forall k \in \mathcal{K}$. \footnote{The results can be extended to the case where there are datasets with distinct sizes as that does not affect the main structure of the privacy analysis.} The size of the global dataset is thereby given by $D_\text{tot}=KD$. 
	
	Suppose the users jointly train a learning model $\vecw \in \mathbb{R}^d$ by minimizing the global loss function $F(\vecw)$, i.e., $\vecw^* = \arg \min_{\vecw} F(\vecw)$. %\footnote{For notational clarity %we omit the datasets in loss functions and gradients. For instance, $F({\bf w}, \caD)$ is simplified as $F(\vecw)$.} 
	FL is an iteration process where in every round, each user $k$
	% minimizes its local loss function $F_k(\vecw)$ using local dataset, instead of direct minimization of $F(\vecw)$. By doing so, each user 
	obtains its local gradient vector $\nabla F_k(\vecw)$ using its local dataset. Then, the edge server estimates the global gradient vector by aggregating the received gradient vectors from the users, then update the model parameter vector $\vecw$ to all users. In total, ${\cal T}=[1,2,\dots, T]$ rounds of iteration is considered.
	
	We assume that the edge server and the users are all honest. However, the external adversary is honest-but-curious, which means that it does not attempt to perturb the aggregated gradients but only eavesdrops on the gradient information in order to infer knowledge about the local datasets. Note that in this paper we focus on the uplink transmission of the local gradient updates from the users to the edge server, which belongs to the setting of local DP. The privacy leakage in the downlink transmission of the global model updates to the users is reserved for future work.   
	
	%Perturbation mechanisms are frequently used to provide privacy guarantees in distributed learning because of their high efficiency and easy integration. However, the perturbed data degrades the training accuracy. 
	%Our goal is to develop a perturbation approach that reduces privacy leakage to adversaries with minimum effect on the training accuracy. To this end, correlated perturbations are injected by the users in a way that the perturbations can cancel out at the edge server but still provide privacy protection at the adversary. %In the remainder of this paper, we will present the correlated perturbation generation mechanism and the power control scheme that balance the model accuracy and privacy level with limited transmit power.    
	
	\section{Problem Formulation}\label{sec:msuq}
	\subsection{Communication Protocol with Correlated Perturbations} 
	
	Let $\vecx_k^\st$ represent the transmitted signal from the $k$-th user to the edge server during the uplink transmission of local gradient updates in the $t$-th round/iteration. The received signal at the edge server is 
	\be
	\vecy^\st=\sum_{k=1}^{K} h_k^\st \vecx_k^\st+\vecz^\st, \label{pcn:rss}
	\ee
	where $h_k^\st \in {\mathbb C}$ is the channel gain from user $k$. The channel noise $\vecz^\st$ is identically and independently distributed (i.i.d.) in all iterations, and follows ${\cal CN} \sim (0, N_0  \bf{I_d})$. To reduce the information leakage to the adversary, we add perturbations to introduce randomness in the transmitted gradient data. This means that instead of transmitting the true gradient $\greF$, the $k$-th user transmits the following noisy update\footnote{To utilize both the real part and the imaginary part, we split $\greF$ to construct a complex vector with the components $[{\greF}]_i+j[{\greF}]_{i+d/2}, i=1,\dots d/2$. For simplicity, we keep the notations ${\greF}$ and $d$. A de-splitting process is done at the receiver nodes.}
	
	\be
	\vecx_k^\st=\alpha_k^\st \bigg(\greF+\vecn_k^\st \bigg),
	\ee
	where $\alpha_k^\st \in \mathbb{C}$ denotes the transmit scaling factor, given by 
	\be 
	\alpha_k^\st=\sqrt{\eta^\st}/ h_k^\st,
	\ee 	
	and $\eta^\st \in {\mathbb R}^+$ is the common power scaling factor. The transmitted signal consists of two components: the local gradient $\greF$, and the $d \times 1$ artificial noise vector $\vecn_k^\st \in {\mathbb C}^d$. It is assumed that each user has limited power budget $P$, i.e., 
	\be \mathbb{E} [\|{\mathbf {x}}_{k}^\st\|^{2}] \leq P. \ee
	Substituting the transmitted signal $\vecx_k^\st$ into \eqref{pcn:rss}, the received signal at the edge server becomes 
	\be
	\vecy^\st=\sum_{k=1}^{K} \sqrt{\eta^\st}\bigg(\greF+\vecn_k^\st \bigg)+\vecz^\st.
	\ee  
	We utilize spatially correlated perturbations at different users, such that the sum of the added perturbations is $\bf{0}$,  i.e.,     
	\be 
	\sum_{k=1}^K \vecn_k^\st=\bf{0}.   \label{pcn:nc}
	\ee   
	Additionally, it is assumed that the perturbations are independent across different iterations.
	In the following, we describe how the perturbation vectors are generated at the users following a covariance-based design.
	
	%There are $K$ $d$-dimensional perturbation vectors, which require a $dK \times dK$ covariance matrix for a complete statistical description.
	We assume that the elements in the $d$-dimensional perturbation vector are independent, so that we can consider each component of $\vecn_k^\st$ independently. We define a $K$-dimensional vector ${\bf v}_i^\st=[\vecn_{1,i}^\st,\vecn_{2,i}^\st \dots \vecn_{K,i}^\st]^T, i=1,2,\dots, d$, which contains the perturbation elements at all $K$ users. Then we can describe the statistical distribution of $\{{\bf v}_i^\st\}$ as i.i.d. ${\cal{CN}}(0, {\bf R}^\st)$, where ${\bf R}^\st= \mathbb{E}[{\bf v}_i^\st {{\bf v}_i^\st}^H]$ (same for all $i$). Let ${\bf u}$ be a $K$-dimensional all-ones vector, it then immediately follows that since ${\bf u}^H {\bf v}_i^\st=0$, we have ${\bf u}^H {\bf R}^\st {\bf u} =0$. Thereby the $K \times K$ covariance matrix ${\bf R}^\st$ should satisfy the following constraints 
	\be \label{pcn:R}
	\sum_k \sum_j {\bf R}_{k,j}^\st=0, \quad
	{\bf R}^\st \succeq 0.
	\ee
	
	The diagonal elements of ${\bf R}^\st$ represent the variances of the perturbations, i.e., ${\bf R}_{k,k}^\st={\mathbb{E}[\lVert \vecn_k \rVert^2]}, \forall k$, while ${\bf R}_{k,j}^\st, \forall j \ne k$ reflects the correlation between $\vecn_k^\st$ and $\vecn_j^\st$. %For instance, in case of unit noise variance, ${\bf R}_{k,j}^\st=-1$ means that $\vecn_k^\st$ and $\vecn_j^\st$ have opposite phases. 
	In particular, the uncorrelated perturbation method commonly adopted in the literature corresponds to the special case with ${\bf R}^\st=\diag{{\bf R}_{1,1}^\st, {\bf R}_{2,2}^\st, \dots {\bf R}_{K,K}^\st}$.
	
	For clarification, we present a simple example on how to generate the covariance matrix of the perturbations with power constraints following our correlated perturbation design. 
	\begin{example}
		%A way to obtain the covariance matrix that guarantees the perturbations sum to zero with power constraints is illustrated via a toy model where
		Consider a case with three users ($K=3$) and the objective is to minimize some convex function $f: \mathbb{R}^{K\times K}\rightarrow\mathbb{R} $ of the covariance matrix ${\bf R}^\st$ subject to the zero-sum perturbation condition and power constraint of each user. The optimization problem can be formulated as 
		\begin{subequations} 
			\begin{align}
				\hspace {1.5pc} & \hspace {-2.7pc} \min _{{\bf R}^\st} \quad f ({\bf R}^\st) \!\!  \\
				&\quad \,\,\hspace {-3.8pc} {\mathrm{ s.t.~}} {\bf R}_{1,1}^\st=4, \quad {\bf R}_{2,2}^\st=4,\quad {\bf R}_{3,3}^\st=4,  \\
				&\quad \,\,\hspace {-3.8pc} \hphantom {\mathrm{s.t.~}} \sum_k \sum_j {\bf R}_{k,j}^\st=0, \quad {\bf R}^\st \succeq 0 \label{eq:cons4}, 
			\end{align}		
		\end{subequations}    
		Since \eqref{eq:cons4} is convex, using this approach one can easily generate covariance matrices by applying different criteria, and test the performance in terms of privacy and learning accuracy numerically. 
		%For instance, the objective function can be related to maximizing the variance of the perturbation generated by the user located closest to the adversary, add constraints that ensure higher correlation coefficients for  perturbations generated by neighboring users, and so on. 	 
	\end{example}
	
	%\begin{remark}
	%There is a remark on the comprehensive design of the perturbation covariance matrix that achieves the trade offs among the training accuracy, privacy level and transmit power. The perturbations cancel out at the edge server so that it will not affect the training accuracy that much. However, the transmit power and privacy level rely on the perturbation covariance matrix. The joint power scaling factor and perturbation correlation design is dedicated in Section \ref{pcnsec:opt}. 
	%\end{remark}

	Once the covariance matrix ${\bf R}^\st$ is determined, one can generate random perturbations by multiplying $K \times d$-dimensional white noise ${\bf W}^\st$ with $\big({\bf R}^\st\big)^{\frac{1}{2}}$. Let the white noise matrix be ${\bf W}^\st=[{\bf W}_1^\st, {\bf W}_2^\st, \dots, {\bf W}_K^\st]^T$ where $\{{\bf W}_k^\st\}$ are $d \times 1$ i.i.d. Gaussian noises given by ${\bf W}_k^\st \sim {\cal CN}(0,{\bf I}_d)$. The noise vectors $\{{\bf W}_k^\st\}$ are also uncorrelated. We obtain a $K \times d$ perturbation matrix ${\bf N}^\st$, i.e., 
	\be  
	{\bf N}^\st=\big({\bf R}^\st\big)^{\frac{1}{2}} {\bf W}^\st  \label{eq:pcnN}
	\ee
	where ${\bf N}^\st=[\vecn_1^\st, \vecn_2^\st,\dots,\vecn_K^\st]^T$.
	
	Now we substitute the correlated perturbation generation mechanism into the transmit power constraints of the users, 
	
	\be 
	\mathbb{E} [\|{\mathbf {x}}_{k}^\st\|^{2}]=\frac{\eta^\st}{|h_k^\st|^2} \left [(G_{k}^\st)^{2}+d {\bf R}_{k,k}^\st \right] \leq P, ~\forall k,t, \label{pcn: powerc}
	\ee
	where $G_{k}^\st$ is an upper bound of the norm of local gradient for user $k$, i.e., $\|\greF \| \le G_{k}^\st$. %\footnote{The intuition of $G_{k}^\st$ will be given in Assump. \ref{pcn:assump1}.} 
	
	%\zheng{Is $G_k^{(t)}$ a bound or the maximum value of the gradient norms in current round $t$?}
	
	Based on the zero-sum correlated perturbations given in \eqref{pcn:nc}, the received signal at the edge server can be written as 
	\be \label{recevedsig}
	\vecy^\st=\sum_{k=1}^{K} \sqrt{\eta^\st} \greF+\vecz^\st.
	\ee
	Note that the covariance matrix of the generated perturbations will affect the power scaling factor $\eta^\st$, which in turn affects the received SNR at the edge server.

	Next, we analyze the impact of the correlated perturbations at the adversary. Let $g_k^\st \in {\mathbb C}$ be the channel gain between user $k$ and the adversary. We define the corresponding effective channel gain as 
	\be
	\rho_k^\st=g_k^\st/h_k^\st,
	\ee
	which quantities the misalignment between the channels from each user $k$ to the adversary and to the server.
	The received signal at the adversary is    
	\begin{align}\label{Siga}
		\vecy_a^\st=&\sum_{k=1}^{K} g_k^\st \vecx_k^\st+\vecz_a^\st  \nonumber\\
		=&\sum_{k=1}^{K} \sqrt{\eta^\st} \rho_k^\st \big(\greF+\vecn_k^\st \big) +\vecz_a^\st, \nonumber\\
		=& \sqrt{\eta^\st} \sum_{k=1}^{K} \rho_k^\st \greF \nonumber\\
		&+\sqrt{\eta^\st} \left( \big({\boldsymbol \rho}^\st\big)^T \big({\bf R}^\st\big)^{\frac{1}{2}} {\bf W}^\st \right)^T+\vecz_a^\st, 
	\end{align}
	where the channel noise $\vecz_a^\st$ follows i.i.d. ${\cal CN} (0, N_a \bf{I_d})$ with the variance $N_a$, and ${\boldsymbol \rho}^\st=[\rho_1^\st, \rho_2^\st, \dots, \rho_K^\st]^T$. We define the total effective noise at the adversary as 
	\be 
	{\bf r}^\st=%\sum_{k=1}^{K} \sqrt{\eta^\st} \rho_k^\st \vecn_k^\st+\vecz_a^\st,
	\sqrt{\eta^\st} \left( \big({\boldsymbol \rho}^\st\big)^T \big({\bf R}^\st\big)^{\frac{1}{2}} {\bf W}^\st \right)^T+\vecz_a^\st.  \label{pcn:r}
	\ee
	Since both components of ${\bf r}^\st$ are Gaussian, we have ${\bf r}^\st \sim {\cal CN}(0, (m^\st)^2 {\bf I_d})$,   
	where the variance of the effective noise per element is 
	\begin{multline} \label{eq:effN}
		(m^\st)^2=\frac{\eta^\st}{d} \mathbb{E}{\bigg[\big \lVert {({\boldsymbol \rho}^\st)}^T \big({\bf R}^\st \big)^{\frac{1}{2}} {\bf W}^\st \big \rVert^2 \bigg]}+N_a.
	\end{multline}
	
	As is shown in \mbox{\eqref{Siga}-\eqref{eq:effN}}, the adversary receives perturbed gradients, due to the amplitude and phase misalignment between the intended channel and the eavesdropping channel. As a result, the added correlated perturbations that add up to zero at the edge server will not cancel out at the adversary. The impact of the perturbations injected by different users is subject to the dual relation between the perturbations and the effective channel gains $\rho_k^\st$. Consequently, the generation of correlated perturbations should take into account the effective channel gains $\rho_k^\st$. 
	%The dependency includes both amplitudes and phases. 
	% Consider a simple example with two users at the same distance to the adversary and to the edge server but with channel phases that differ by $180$ degrees: the identical perturbations then add up destructively. 
	
	The advantage of adding correlated perturbations can be interpreted from the perspective of signal-to-noise ratio (SNR) or signal-to-interference-plus-noise ratio (SINR). Generally, a higher SNR at the edge server yields higher learning accuracy while a smaller SINR at the adversary implies better privacy protection. 
	\begin{remark}\label{pcn:sinr}
		At the edge server, the SNR of the aggregated signals without perturbations, with uncorrelated perturbations and correlated perturbations are
		\begin{align*}
			\text{SNR}_s^\st=\bigg[\underbrace{\frac{\eta_{\text{nom.}}^\st  P_s^\st}{d N_0}}_{nominal},~ \underbrace{\frac{\eta_{\text{pert.}}^\st  P_s^\st}{d (\eta^\st {\bf R}_{k,k}^\st+N_0)}}_{uncorrelated},~ \underbrace{\frac{\eta_{\text{pert.}}^\st  P_s^\st}{d N_0}}_{correlated}\bigg]
		\end{align*}
		where $P_s^\st \triangleq \sum_{k=1}^{K} \|\greF \|^2$. Based on the transmit power constraint \eqref{pcn: powerc}, the power scaling factors for the three different perturbation approaches are given by: \footnote{The power scaling factor will be further optimized jointly with covariance matrix ${\bf R}$ with both power constraints and privacy constraint in section IV\&V. The optimal power scaling and covariance matrix can thereby be distinct with different perturbation approaches. }

		\begin{subequations}
			\begin{empheq}[left=\empheqlbrace]{align*}
				\eta_{\text{nom.}}^\st &= P \min_k \frac{|h_k^\st|^2}{(G_{k}^\st)^{2}},\\
				\eta_{\text{pert.}}^\st &= P \min_k \frac{|h_k^\st|^2}{(G_{k}^\st)^{2}+d {\bf R}_{k,k}^\st}. 
			\end{empheq}
		\end{subequations}
		
		Similarly, we obtain the SINR at the adversary for the three different perturbation methods as 
		\begin{align*}
			\!\!\text{SINR}_a^\st\!\!=\!\! \bigg[\underbrace{\frac{\eta_{\text{nom.}}^\st  P_a^\st}{d N_a}}_{nominal},  \underbrace{\frac{\eta_{\text{pert.}}^\st  P_a^\st}{d\! \left(\!\eta_{\text{pert.}}^\st\sum_k |\rho_k^\st|^2{\bf R}_{k,k}^\st\!\!+ \!N_a\!\right)}}_{uncorrelated},  \underbrace{\frac{\eta_{\text{pert.}}^\st P_a^\st}{ d(m^\st)^2}}_{correlated}\!\bigg]
		\end{align*}
		where $P_a^\st \triangleq \sum_{k=1}^{K}  |\rho_k^\st|^2 \|\greF \|^2$ and $(m^\st)^2$ is given in \eqref{eq:effN}. Here, the SINR is defined as the ratio between the power of the desired signal 
		%conveying the information of all the gradient updates, 
		and the power of the total effective noise including perturbations and receiver noise. 
		
		It can be observed that the received SNR at the edge server with correlated perturbations is slightly smaller than that of non-perturbation case only due to the power cost for transmitting the perturbations. This states that adding correlated perturbations does not significantly affect the learning accuracy. In contrast, with uncorrelated perturbations, there is an apparent degradation of SNR due to the aggregated perturbations in the received signal. Compared with the non-perturbation case, the SINR at the adversary is smaller with both uncorrelated perturbations and correlated perturbations due to the effective noise and smaller power scaling factors. This indicates the advantage of adding perturbations in terms of privacy protection. To conclude, the correlated perturbation approach provides training accuracy and privacy guarantee at the same time.  
	\end{remark}

	\subsection{Learning Protocol} 
	%The learning protocol is dedicated below. 
	
	In the $t$-th round, the local gradient $\greF$ is computed based on the local dataset $\caD_k$, and the current model parameter vector $\vecw^\st$. The local loss function is given by 
	
	\be
	\losF= \sum_{({\boldsymbol \mu}, \nu) \in \caD_k} f(\vecw^\st,{\boldsymbol \mu}, \nu).
	\ee
	Here, $f(\cdot)$ is the loss function quantifying the prediction error based on the training sample $\boldsymbol{\mu}$ with respect to the label $\nu$. The local gradient is thus obtained as 
	\be
	\hspace {-.4pc} \greF= \sum_{({\boldsymbol \mu}, \nu) \in \caD_k} \nabla f(\vecw^\st,{\boldsymbol \mu}, \nu).
	\ee
	Assuming error-free uplink transmission, the aggregated gradient vector at the edge server is
	\be
	\ggreF= \frac{1}{K} \sum_{k=1}^{K} \greF.
	\ee
	However, due to random fading and noise in wireless channels, the edge server can only obtain an estimated global gradient $\hggreF$, and then update the model parameter vector with a proper step-length $\lambda$ as  
	\be \label{prfl:w}
	\vecw^{(t+1)}=\vecw^\st-\lambda \hggreF. 
	\ee
	%	The learning process aims to minimize the regularized global loss function, i.e.,
	%	\be {\mathbf{w}}^{*}=\arg \min F({\mathbf {w}}). \label{eq:flopt}
	%	\ee
	
	We assume that the edge server knows the power scaling factor $\eta^\st$ at the users. Then, in the $t$-th round, based on the received signal $\vecy^\st$, the edge server can obtain the estimated global gradient by
	\begin{multline}
		\hggreF=\frac{1}{K}\left(\sqrt{\eta^\st}\right)^{-1} \vecy^\st \\
		\hspace {-.6pc}=\frac{1}{K}\sum_{k=1}^{K} \greF+ \frac{1}{K}\big(\sqrt{\eta^\st}\big)^{-1} \vecz^\st.
		%\hggreF=\frac{1}{K}\left(\sqrt{\eta^\st}\right)^{-1}  \Re\{\vecy^\st\} \\
		%\hspace {-.6pc}=\frac{1}{K}\sum_{k=1}^{K} \greF+ \frac{1}{K}\big(\sqrt{\eta^\st}\big)^{-1} \Re\{\vecz^\st\},
	\end{multline}

	%Note that here we split the data over I/Q so that both the real part and the imaginary part of the received signal convey useful information. 
	
	\subsection{Privacy Analysis}
	The privacy level at the adversary is measured with differential privacy (DP). In the following, we provide some basic definitions and privacy analysis under the OtA FL setting.
	
	\begin{definition}[Differential Privacy \cite{dwork2014algorithmic}]
		DP quantifies how much two neighboring datasets can be distinguished by observing the output (received signal) $\vecy$. Let $\caD$ and $\caD'$ be two neighboring datasets that differ only in one sample, i.e., $\lVert \caD-\caD' \rVert_1=1$. The differential privacy loss corresponding to the log-likehood ratio of events $\vecy|\caD$ and $\vecy|\caD'$ is  
		$${\cal L}_{\caD,\caD'}(\vecy)= \ln \frac{\mathrm P(\vecy|\caD)}{\mathrm P(\vecy|\caD')}.$$ 
		The $(\epsilon, \delta)$-differential privacy is thereby achievable under condition that the absolute value of the DP loss is less than a small value $\epsilon$ with probability higher than $1-\delta$ where $\epsilon \ge 0, \delta \in [0,1]$, i.e., 
		$$\mathrm P(\lvert{\cal L}_{\caD,\caD'}(\vecy)\rvert \le \epsilon) \ge  1-\delta.$$ 
	\end{definition}
	
	DP loss is measured via the probability of observing an output that occurs given a dataset $\caD$, and the probability of seeing the same value given a neighboring dataset $\caD'$, where the probability space is some randomized mechanism. The aim of DP is to guarantee that the distribution of the output given two different inputs does not change too much. Smaller parameters $(\epsilon, \delta)$ imply higher privacy level of the randomized mechanism. Pure $\epsilon$-DP is achieved if $\delta=0$. 
	%A common choice of the perturbation mechanism the Gaussian perturbation mechanism given below. 
	
	\begin{definition}[Gaussian Mechanism \cite{dwork2014algorithmic}]
		Let $f(\caD)$ be a function in terms of an input $\caD$ subject to $(\epsilon, \delta)$-DP. Suppose a user wants to release function $f(\caD)$, the Gaussian mechanism ${\cal M}$ with variance $\sigma^2$ is then defined as:	
		\be \label{pcn:GM}
		{\cal M}(\caD) \triangleq f(\caD) + \mathcal{N}\left({0,\sigma^2{\bf{I}}} \right). 
		\ee	
	\end{definition}

	\begin{definition}[Sensitivity \cite{dwork2014algorithmic}]
		The $l_2$-sensitivity of function $f$ is denoted by $\Delta_f$, i.e., ${\Delta}_f=\max_{\caD, \caD'} |{f(\caD)-f(\caD')}|_2$.
	\end{definition}
	
	Intuitively, $\Delta_f$ captures the maximum possible change in the output caused by the change in a data point, and thereby gives an upper bound on how much perturbation should be added to hide the change of the single record. The absolute value of DP loss under the Gaussian mechanism \eqref{pcn:GM} is 
	\be \label{pcn:GMDP}
	|{\cal L}_{\caD,\caD'}(y)| = \left|\ln  \frac{\mathrm P\left(y-f(\caD)\right)}{\mathrm P\big(y-f(\caD')\big)}\right|
	\le \left|\ln \frac{\exp{\left(\frac{-x^2}{2\sigma^2}\right)}}{\exp \left( \frac{-(x+\Delta_f)^2}{2\sigma^2}\right)}\right|  \notag
	\ee 
	where $y$ is a possible output and $x=y-f(\caD)$. 
	
	Now we interpret the $(\epsilon, \delta)$-DP principle at the adversary. First, with distributed data, the notion of neighboring global datasets implies that only one local dataset will be different in one sample, i.e., $|\caD'_l-\caD_l|_1=1, \caD'_k=\caD_k, k \ne l$ where $\caD=\cup_{k=1}^K{\caD_k}$ and $\caD'=\cup_{k=1}^K{\caD'_k}$. Then the composition theorem of DP  is applied to measure the privacy level after multiple iterations \cite{dwork2014algorithmic}. Let the received signals during successive $T$ iterations be  $\vecy_a=\{\vecy_a^{(t)}\}_{t=1}^T$. The corresponding DP loss after $T$ rounds of iterations is given by  
	\be
	{\cal L}_{\caD,\caD'}(\vecy_a)=\ln \prod_{t=1}^T \frac{\mathrm P(\vecy_a^\st|\vecy_a^{(1)}, \dots,\vecy_a^{(t-1)}, \caD)}{\mathrm P(\vecy_a^\st|\vecy_a^{(1)}, \dots,\vecy_a^{(t-1)}, \caD')}. \label{pcn:dploss}
	\ee   
	
	Since the randomness comes from the perturbation mechanism while the gradients are deterministic, the probability density profile (PDF) of the effective noise can be utilized to quantify the difference between the outputs of neighboring datasets. 
	
	Let ${\bf v}^\st$ be the difference between the desired signal w.r.t. two neighboring global datasets $\caD$ and $\caD'$, i.e.,
	\begin{multline*}
		\!\!\!\!{\bf v}^\st\!=\!\sum_{k=1}^{K} \! \sqrt{\eta^\st} \rho_k^\st  \left(\!\greFD\!-\! \greFDd \right) \\ 
		=\sqrt{\eta^\st} \rho_l^\st  \!\!\left(\sum_{({\boldsymbol \mu}, \nu) \in \caD_l'} \!\!f(\vecw^\st,{\boldsymbol \mu}, \nu)- \!\!\sum_{({\boldsymbol \mu}, \nu) \in \caD_l} \!\!f(\vecw^\st,{\boldsymbol \mu}, \nu) \right).    
	\end{multline*} 
	\if
	Recalling the upper bound of the loss function and the definition of neighboring datasets, it holds that  
	\begin{multline}
		\!\!\!{\bf v}^\st=\sqrt{\eta^\st} \rho_l^\st \\  \times \left(\sum_{({\boldsymbol \mu}, \nu) \in \caD_l'} f(\vecw^\st,{\boldsymbol \mu}, \nu)-\right. 
		\left. \sum_{({\boldsymbol \mu}, \nu) \in \caD_l} f(\vecw^\st,{\boldsymbol \mu}, \nu) \right) 
	\end{multline}
	\fi 
	To obtain a bound on ${\bf v}^\st$ for the sensitivity analysis, we assume that the norm of the sample-wise loss function is upper bounded as follows \cite{Osvaldo21,Shi22Ris}: 
	\begin{assumption} \label{pcn:assump1}
		\textbf{(Bounded sample-wise gradient)}: The norm of the sample-wise gradient at any iteration defined as $\nabla f(\vecw^\st,{\boldsymbol \mu}, \nu)$ is bounded by a constant value $\gamma^\st$, i.e., 	
		\be
		\lVert \nabla f(\vecw^\st,{\boldsymbol \mu}, \nu) \rVert \le \gamma^\st.  
		\ee
		Based on triangle inequality, this assumption indicates that there will always be a constant $G_k^\st \le D_k \gamma^\st$ satisfying $\|\greF \| \le G_{k}^\st$.  
	\end{assumption}

	From the definition of sensitivity, we define $\Delta^\st$ as the maximum distance between the norms of the desired signals w.r.t. all possible pairs of neighboring datasets $\{{\cal D},{\cal D'}\}$, i.e.,
	\be
	\Delta^\st=\max_{{\cal D},{\cal D'}} \lVert {\bf v}^\st \rVert \le 2 \left(\gamma^\st \sqrt{\eta^\st} \rho_{max}^\st\right),  
	\ee
	where we let $\rho_{max}^\st=\max_k \big|\rho_k^\st\big|$. 
	Then, we obtain the privacy constraint of our considered model in the following theorem. 	
	\begin{theorem}\label{Theo:DPadv}
		The considered OtA FL system with proposed correlated perturbation mechanism is $(\epsilon, \delta)$-differential private if the following condition holds
		\begin{multline} \label{DPyaf}
			\sum_{t=1}^{T} \left(\frac{2   \gamma^\st}{m^\st}  \sqrt{\eta^\st} \rho_{max}^\st\right)^2  \\   < \left(\sqrt{\epsilon+\left(C^{-1}\big(1/\delta \big)\right)^2}-C^{-1}\big(1/\delta \big) \right)^2
			\triangleq {\cal R}_{dp}(\epsilon, \delta)
		\end{multline}
		The function $C(x)$ defined as $C(x)=\sqrt{\pi} x \exp(x^2)$ is introduced to simplify the expression. 
	\end{theorem}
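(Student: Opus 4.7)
The plan is to reduce the privacy analysis to a Gaussian mechanism computation at each iteration, apply composition across the $T$ rounds using independence of the injected perturbations, and then translate the resulting Gaussian tail bound into the closed-form condition involving $C^{-1}(1/\delta)$. Starting from \eqref{Siga}--\eqref{pcn:r}, the adversary's observation in round $t$ is a deterministic function of $\caD$ plus complex Gaussian noise ${\bf r}^\st\sim\mathcal{CN}(0,(m^\st)^2{\bf I}_d)$ with per-element variance $(m^\st)^2$ as in \eqref{eq:effN}. For two neighboring datasets $\caD$ and $\caD'$, the densities $P(\vecy_a^\st|\caD)$ and $P(\vecy_a^\st|\caD')$ are Gaussian with identical covariance and means that differ by ${\bf v}^\st$. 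A direct computation of the log-likelihood ratio (passing through the $2d$-dimensional real representation of the complex noise) shows that, under the worst case $\|{\bf v}^\st\|=\Delta^\st$, the per-round DP loss $\mathcal{L}^\st$ is Gaussian with mean $(\Delta^\st)^2/(m^\st)^2$ and variance $2(\Delta^\st)^2/(m^\st)^2$.

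Next, because the artificial perturbations $\vecn_k^\st$ and the receiver noise $\vecz_a^\st$ are independent across iterations, the chain-rule factorization in \eqref{pcn:dploss} reduces the total DP loss to a sum of per-round losses, giving $\mathcal{L}_{\caD,\caD'}(\vecy_a)\sim\mathcal{N}(M,2M)$ with $M=\sum_{t=1}^{T}(\Delta^\st)^2/(m^\st)^2$. Invoking Assumption~\ref{pcn:assump1} together with the sensitivity bound $\Delta^\st\le 2\gamma^\st\sqrt{\eta^\st}\rho_{max}^\st$ yields $M\le\sum_{t=1}^{T}(2\gamma^\st\sqrt{\eta^\st}\rho_{max}^\st/m^\st)^2$, which is precisely the LHS of \eqref{DPyaf}. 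It therefore suffices to show that $M\le\mathcal{R}_{dp}(\epsilon,\delta)$ implies the $(\epsilon,\delta)$-DP tail condition $P(|\mathcal{L}_{\caD,\caD'}(\vecy_a)|>\epsilon)\le\delta$.

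For the tail bound, I would use the dominant upper tail $P(\mathcal{L}_{\caD,\caD'}(\vecy_a)>\epsilon)=\tfrac{1}{2}\,\mathrm{erfc}\bigl((\epsilon-M)/(2\sqrt{M})\bigr)$ together with the classical asymptotic $\mathrm{erfc}(x)\sim 1/(\sqrt{\pi}\,x\,e^{x^2})=1/C(x)$. Enforcing the tail to be at most $\delta$ rearranges to $(\epsilon-M)/(2\sqrt{M})\ge C^{-1}(1/\delta)$; writing $c=C^{-1}(1/\delta)$, this is a quadratic in $\sqrt{M}$ whose non-negative root gives $\sqrt{M}\le\sqrt{\epsilon+c^2}-c$, and squaring yields $M\le(\sqrt{\epsilon+c^2}-c)^2=\mathcal{R}_{dp}(\epsilon,\delta)$, which is the claimed condition.

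The main obstacle I anticipate is the final tail-bound step: extracting the exact form of the RHS from the erfc--$C$ correspondence rather than a looser numerical surrogate, and arguing that the lower tail $P(\mathcal{L}_{\caD,\caD'}(\vecy_a)<-\epsilon)$ can be absorbed because it equals $Q((\epsilon+M)/\sqrt{2M})$ evaluated at a strictly larger argument and is therefore exponentially smaller than the upper tail. A secondary bookkeeping issue is the factor of two in the variance coming from the complex-to-real decomposition of the Gaussian noise, which must be tracked carefully throughout the calculation.
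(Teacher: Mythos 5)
Your proposal is correct and follows essentially the same route as the paper's proof: reduce to a Gaussian mechanism, use independence across rounds so the total DP loss is $\mathcal{N}(M,2M)$ with $M=\sum_{t}(\Delta^{(t)}/m^{(t)})^2$, bound the tail, and invert the resulting quadratic in $\sqrt{M}$ via $C^{-1}(1/\delta)$. The only point to tighten is that you should invoke the actual inequality $\mathrm{erfc}(x)\le e^{-x^2}/(\sqrt{\pi}\,x)$ for $x>0$ (equivalently the Gaussian tail bound the paper uses) rather than the asymptotic equivalence, after which your handling of the lower tail by domination is fine and yields exactly the paper's condition.
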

	
	\begin{proof}
		See Appendix.
		%See Appendix A.	
		%The proof can be obtained referring to [Lemma 1, \cite{Osvaldo21}] and [Theorem 3.20, \cite{dwork2014algorithmic}] by taking into account the phase shifts of the wireless channels and the effective channel gains at the adversary.				
	\end{proof}
	
	Theorem \ref{Theo:DPadv} states that both the power scaling factor $\eta^\st$ and the variance of effective noise $(m^\st)^2$ contribute to privacy protection. In general, smaller $\eta^\st$ and higher $(m^\st)^2$ result in higher privacy level. The effective noise contains two parts: the perturbations and the channel noise, where the first depends on the scaling factor, the effective channel gain $\boldsymbol{\rho}^\st$ and the correlation matrix ${\bf R}^\st$. This result is in line with the discussions provided in Remark \ref{pcn:sinr}.
	
	\subsection{Convergence Performance}
	In addition to privacy protection, model accuracy is another important aspect of our proposed design. 
	We use the optimality gap between the expectation of the global loss function after $T$ rounds of gradient decent and the optimal loss function $F^*$ as the metric to quantify the convergence performance. Here the expectation is taken over the randomness of the additive channel noise. 
	
	To derive the upper bound of the expected optimality gap, the following assumptions on gradients which are frequently used in the literature \cite{Osvaldo21,Shi22Ris, karimi2016linear}, are introduced below: 
	\begin{assumption} \label{pcn:assump2} 
		\textbf{(Smoothness)}: The global loss function $F(\vecw)$ is smooth and continuously differentiable with Lipschitz continuous gradient $\nabla F(\vecw)$. There exists a constant $L>0$, i.e.,
		\begin{align} \left \|{\nabla F({\mathbf {w}})-\nabla F({\mathbf {w}}') }\right \|\leq L \left \|{ {\mathbf {w}}- {\mathbf {w}}' }\right \|, \forall {\mathbf {w}}, {\mathbf {w}}' \in \mathbb {R}^{d}, \end{align}
		which implies that for all ${\mathbf {w}}, {\mathbf {w}}'\in \mathbb {R}^{d}$, it holds that 
		\begin{align}&\hspace {-.5pc}F({\mathbf {w}}')\leq F({\mathbf {w}})+ \nabla F({\mathbf {w}})^{\sf T}({\mathbf {w}}'- {\mathbf {w}})+\frac {L}{2} \left \|{ {\mathbf {w}}- {\mathbf {w}}' }\right \|^{2}. \end{align} 
		
		Assumption \ref{pcn:assump2} guarantees that the gradient of the loss function would not change arbitrarily quickly w.r.t. the parameter vector. 
	\end{assumption}
	\begin{assumption} \label{pcn:assump3} 
		\textbf{(Polyak-Lojasiewicz Inequality)}: 
		In Polyak-Lojasiewicz (PL) condition, it holds for some $\mu>0$ that
		\be \left \|{\nabla F({\mathbf {w}}) }\right \|^{2}\geq 2\mu \left [{F({\mathbf {w}})-F^{*} }\right], \forall {\mathbf {w}} \in \mathbb {R}^{d}, 
		\ee
		where $F^*$ is the optimal function value of $F(\vecw)$. 
		
		Assumption \ref{pcn:assump3} is more general than the standard assumption of strong convexity \cite{karimi2016linear}. 
	\end{assumption}

	With correlated perturbation mechanism, the received signal at the edge server only contains the desired signal and the channel noise. According to \cite{Osvaldo21, Shi22Ris}, the expected optimality gap after $T$ iterations with learning rate fixed at $\lambda=1/L$ and the assumptions mentioned previously, is upper bounded by 
	\begin{align}&\hspace {-.5pc} \mathbb {E} \left [F \left ({{\mathbf {w}}^{(T+1)} }\right)\right]-F^{*} \leq \left (1-\frac{\mu }{L}\right)^{T} \left [{F \left ({{\mathbf {w}}^{(1)} }\right)-F^{*}}\right] \notag \\  &\qquad\displaystyle { +\,\frac {d}{2L(KD)^{2}} \sum _{t=1}^{T}  \left ({1-\frac{\mu }{L}}\right)^{T-t} \frac{N_{0}}{\eta^{(t)}}}. \label{eq:Convf}  \end{align}	
	As shown in \eqref{eq:Convf}, the upper bound of the expected optimality gap is independent of the perturbations due to the zero-sum property of our perturbation design. This bound is subject to some given constants and the power scaling factor $\eta^\st$ which is our design parameter. Neglecting the constant terms, we focus on minimizing the controllable term $\sum_{t=1}^{T} \left(1-\frac{\mu}{L}\right)^{-t}/\eta^\st$ in the following section.

	\section{System Optimization}\label{pcnsec:opt}
	We aim at developing a power control and perturbation correlation algorithm, which determines the scaling factor $\eta^\st$ and the covariance matrix of the correlated perturbations ${\bf R}^\st$ that minimize the optimility gap while satisfying the privacy constraint, the transmitted power budget and the perturbation correlation conditions over $T$ communication rounds. 
	Before presenting the optimization problem, we reformulate the the variance of the effective noise $(m^\st)^2$ as follows
	\begin{align*}   
		&\mathbb{E}{\left[\big  \lVert {({\boldsymbol \rho}^\st)}^T \big({\bf R}^\st \big)^{\frac{1}{2}} {\bf W}^\st  \big \rVert^2 \right]} \\
		&=\mathbb{E}{\bigg[\big({\boldsymbol \rho}^\st\big)^T \big({\bf R}^\st\big)^{\frac{1}{2}} {\bf W}^\st \big({\bf W}^\st\big)^H {\big({\bf R}^\st\big)^{\frac{1}{2}}}^H \big({\boldsymbol \rho}^\st\big)^*\bigg]}\\
		&=\big({\boldsymbol \rho}^\st\big)^T\big({\bf R}^\st\big)^{\frac{1}{2}}\left(\mathbb{E}{\big[{\bf W}^\st \big({\bf W}^\st\big)^H\big]}\right) {\big({\bf R}^\st\big)^{\frac{1}{2}}}^H \big({\boldsymbol \rho}^\st\big)^*\\
		&=\big({\boldsymbol \rho}^\st\big)^T\big({\bf R}^\st\big)^{\frac{1}{2}} d~{\bf I_K} {\big({\bf R}^\st\big)^{\frac{1}{2}}}^H \big({\boldsymbol \rho}^\st\big)^*\\
		%&=d \big  \lVert {({\boldsymbol \rho}^\st)}^T \big({\bf R}^\st \big)^{\frac{1}{2}} \big \rVert^2\\
		%&=d\Tr{\left(\big({\boldsymbol \rho}^\st\big)^* \big({\boldsymbol \rho}^\st\big)^T {\bf R}^\st \right)} \\
		&=d \big({\boldsymbol \rho}^\st\big)^T {\bf R}^\st \big({\boldsymbol \rho}^\st\big)^*. 
	\end{align*}
	Then we get 
	\be
	(m^\st)^2=\eta^\st \big({\boldsymbol \rho}^\st\big)^T {\bf R}^\st \big({\boldsymbol \rho}^\st\big)^*+N_a.
	\ee
	Note that in practical causal settings, future channels and gradient information are unknown. We thus apply static privacy budget allocation over $T$ rounds such that the long-term privacy constraint is separated into $T$ independent privacy constraints.
	Let the privacy budget allocation be 
	\be 
	{\cal R}_{dp}^\st(\epsilon, \delta)=\phi^\st {\cal R}_{dp}(\epsilon, \delta),~\forall t \in {\cal T},
	\ee
	where $\sum_t \phi^\st=1, 0<\phi^\st<1, \forall t$. The coefficients $\phi^\st$ can be generated assuming identical or random privacy allocation. 
	In this case with per-slot constraint, the objective function becomes $\left(1-\frac{\mu}{L}\right)^{-t}/\eta^\st$, which is the controllable term of the optimality gap given in \eqref{eq:Convf}, in the $t$-th round. 
	
	The optimization problem in the $t$-th learning slot is  
	\begin{subequations}\label{pcn:opt0}
		\begin{align}
			{\bf P0:} \hspace {1.5pc} & \hspace {-1.2pc}\min _{{\bf R}^\st,~\eta^\st} \quad \frac{\left(1-\frac{\mu}{L}\right)^{-t}}{\eta^\st} \!\! \label{pcn:opt0-0}\\
			&\quad \,\,\hspace {-3.8pc} {\mathrm{ s.t.}}~\frac{\eta^\st(\gamma^\st \rho_{max}^\st)^2}{ \eta^\st\big({\boldsymbol \rho}^\st\big)^T {\bf R}^\st \big({\boldsymbol \rho}^\st\big)^*+N_a}  \le \frac{{\cal R}_{dp}^\st(\epsilon, \delta)}{4}, \!\! \label{pcn:opt0-1}\\
			&\quad \,\,\hspace {-3.8pc}\hphantom {\mathrm{ s.t.~}} \eta^\st \left [(G_{k}^\st)^{2}+d {\bf R}_{k,k}^\st \right] \leq |h_k^\st|^2 P, \quad \forall k, \!\! \label{pcn:opt0-2}\\ &\quad \,\,\hspace {-3.8pc}\hphantom {\mathrm{s.t.~}} \sum_k \sum_i {\bf R}_{k,i}^\st=0,\label{pcn:opt0-3}\\
			&\quad \,\,\hspace {-3.8pc}\hphantom {\mathrm{s.t. ~}} {\bf R}^\st \succeq 0,\label{pcn:opt0-4}\\
			&\quad \,\,\hspace {-3.8pc}\hphantom {\mathrm{s.t. ~}} \eta^\st > 0. \label{pcn:opt0-5}		
		\end{align}		
	\end{subequations}
	
	It can be observed that {\bf P0} is linear in ${\bf R}^\st$ and $1/\eta^\st$ with positive semi-definite constraint on ${\bf R}^\st$. By change of variables, i.e. letting $b^\st=1/\eta^\st$, {\bf P0} can be reformulated into a convex problem {\bf P1}, and be solved using existing numerical solver, e.g., CVX \cite{cvx}. 
	\begin{subequations}\label{pcn:opt5}   
		\begin{align*}
			{\bf P1:} \hspace {1.5pc} & \hspace {-1.2pc}\min _{{\bf R}^\st,~b^\st} \quad  \left(1-\frac{\mu}{L}\right)^{-t} b^\st \!\! \\
			&\quad \,\,\hspace {-4. pc} {\mathrm{ s.t.}}~(\gamma^\st \rho_{max}^\st)^2  \le \! \frac{{\cal R}_{dp}^\st(\epsilon, \delta)}{4} \!\left(\!\big({\boldsymbol \rho}^\st\big)^T {\bf R}^\st \big({\boldsymbol \rho}^\st\big)^* %\right. \notag \\
			%&\quad \,\,\hspace {+7.8pc}\hphantom {\mathrm{ s.t.~}} \left. 
			+ N_a b^\st\!\right), \!\! \\
			&\quad \,\,\hspace {-3.8pc}\hphantom {\mathrm{ s.t.~}} (G_{k}^\st)^{2}+d {\bf R}_{k,k}^\st  \leq b^\st |h_k^\st|^2 P , \quad \forall k, \!\! \\
			&\quad \,\,\hspace {-3.8pc}\hphantom {\mathrm{s.t. ~}} \mbox{\eqref{pcn:opt0-3}-\eqref{pcn:opt0-4}},~b^\st > 0. 
		\end{align*}		
	\end{subequations}
	
	\if
	The convexity of \eqref{pcn:opt0-1} is guaranteed if we can prove that $1/\Tr({\bf A}{\bf R}^\st)$ where ${\bf R}^\st \succeq 0$ and ${\bf A}$ is a deterministic positive semi-definite matrix. Lemma \ref{lem:InvTr} shows that this claim holds true. Note that $b^\st$ can be also merged into the $1/\Tr({\bf A}{\bf R}^\st)$.  
	
	\begin{lemma} \label{lem:InvTr}
		The inverse function of the trace of the multiplication of a $n \times n$ positive semi-definite matrix ${\bf Z} \succeq 0$ and a deterministic positive semi-definite matrix ${\bf A}$, i.e., $f:S^{+}({\mathbb R}^n) \to {\mathbb R}, f({\bf Z})= 1/\Tr({\bf A}{\bf Z})$ is convex.
	\end{lemma}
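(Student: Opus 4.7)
The plan is to recognize $f$ as the composition of a scalar convex function with a linear map, and invoke the standard convex-composition rule. First, I would observe that the map ${\bf Z} \mapsto \Tr({\bf A}{\bf Z})$ is linear in ${\bf Z}$ by linearity of the trace: for any ${\bf Z}_1, {\bf Z}_2 \succeq 0$ and $\theta \in [0,1]$, one has $\Tr({\bf A}(\theta {\bf Z}_1 + (1-\theta){\bf Z}_2)) = \theta \Tr({\bf A}{\bf Z}_1) + (1-\theta)\Tr({\bf A}{\bf Z}_2)$. Equivalently, $\Tr({\bf A}{\bf Z}) = \langle {\bf A}, {\bf Z}\rangle_F$ is a Frobenius inner product, manifestly affine in ${\bf Z}$. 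Moreover, because ${\bf A} \succeq 0$ and ${\bf Z} \succeq 0$, this trace is nonnegative, so the natural domain is the convex set $\{ {\bf Z} \succeq 0 : \Tr({\bf A}{\bf Z}) > 0\}$, obtained as the intersection of the PSD cone with an open half-space.

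Second, I would recall that the scalar function $g(t)=1/t$ is convex on $(0,\infty)$, as is immediate from $g''(t)=2/t^{3}>0$. The composition rule, which says that $g \circ h$ is convex whenever $g$ is convex and $h$ is affine, then yields convexity of $f({\bf Z})=g(\Tr({\bf A}{\bf Z}))$ on the above domain.

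As a self-contained alternative, I would verify convexity directly: set $t_i = \Tr({\bf A}{\bf Z}_i)>0$; linearity of trace gives $\Tr({\bf A}(\theta {\bf Z}_1 + (1-\theta){\bf Z}_2)) = \theta t_1 + (1-\theta)t_2$, and the scalar Jensen inequality for $1/t$ delivers
\begin{equation*}
\frac{1}{\theta t_1 + (1-\theta)t_2} \le \frac{\theta}{t_1}+\frac{1-\theta}{t_2},
\end{equation*}
which is precisely the convexity inequality for $f$.

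I do not expect any genuine obstacle; the lemma is a textbook application of convex analysis. The only subtlety worth noting is ensuring the denominator stays strictly positive throughout the feasible set of problem \textbf{P1}: there ${\bf A} = \big({\boldsymbol \rho}^\st\big)^*\big({\boldsymbol \rho}^\st\big)^T$ is nonzero whenever ${\boldsymbol \rho}^\st \ne 0$, and the power constraints keep the diagonal of ${\bf R}^\st$ nontrivial, so $\Tr({\bf A}{\bf R}^\st)$ is safely positive and the per-slot objective inherits convexity after the change of variables $b^\st=1/\eta^\st$.
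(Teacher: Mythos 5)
Your proof is correct, and it takes a somewhat different route from the paper's. The paper restricts $f$ to an arbitrary line ${\bf Z}={\bf X}+t{\bf Y}$ in the PSD cone and verifies convexity by computing the second derivative of $g(t)=1/\Tr[{\bf A}({\bf X}+t{\bf Y})]$, showing it is nonnegative because the denominator $\Tr[{\bf A}({\bf X}+t{\bf Y})]=\Tr[{\bf A}^{1/2}({\bf X}+t{\bf Y}){\bf A}^{1/2}]$ is nonnegative. You instead observe that ${\bf Z}\mapsto\Tr({\bf A}{\bf Z})=\langle{\bf A},{\bf Z}\rangle_F$ is affine and invoke the composition rule with the scalar convex function $t\mapsto 1/t$ on $(0,\infty)$; the two arguments rest on the same underlying facts, but yours replaces the calculus with a standard structural lemma. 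Your version buys two things: it avoids the derivative bookkeeping (the paper's displayed $g''$ in fact carries a slip, writing $\Tr({\bf Y})$ where the derivative of $t\mapsto\Tr[{\bf A}({\bf X}+t{\bf Y})]$ is $\Tr({\bf A}{\bf Y})$, though this does not affect the sign conclusion), and it is more careful about the domain, since for $f$ to be defined and the second-derivative test to be meaningful one needs $\Tr({\bf A}{\bf Z})>0$ strictly, a point the paper glosses over with ``$\ge 0$'' but which you address explicitly and correctly tie back to the feasibility of {\bf P1}.
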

	
	\begin{proof}
		See Appendix D.
	\end{proof}
	Since it holds that $\big({\boldsymbol \rho}^\st\big)^* \big({\boldsymbol \rho}^\st\big)^T \succeq 0$ and ${\bf R}^\st \succeq 0$, Lemma \ref{lem:InvTr} is applicable in our case. 
	\fi
	
	\section{Simulations}\label{sec:sim}
	In this section, we present simulation results to validate the performance of the correlated perturbation approach and compare it with non-perturbation and uncorrelated perturbation approaches. 
	\begin{figure}[t] 
		\centering
		\includegraphics[width=7.cm]{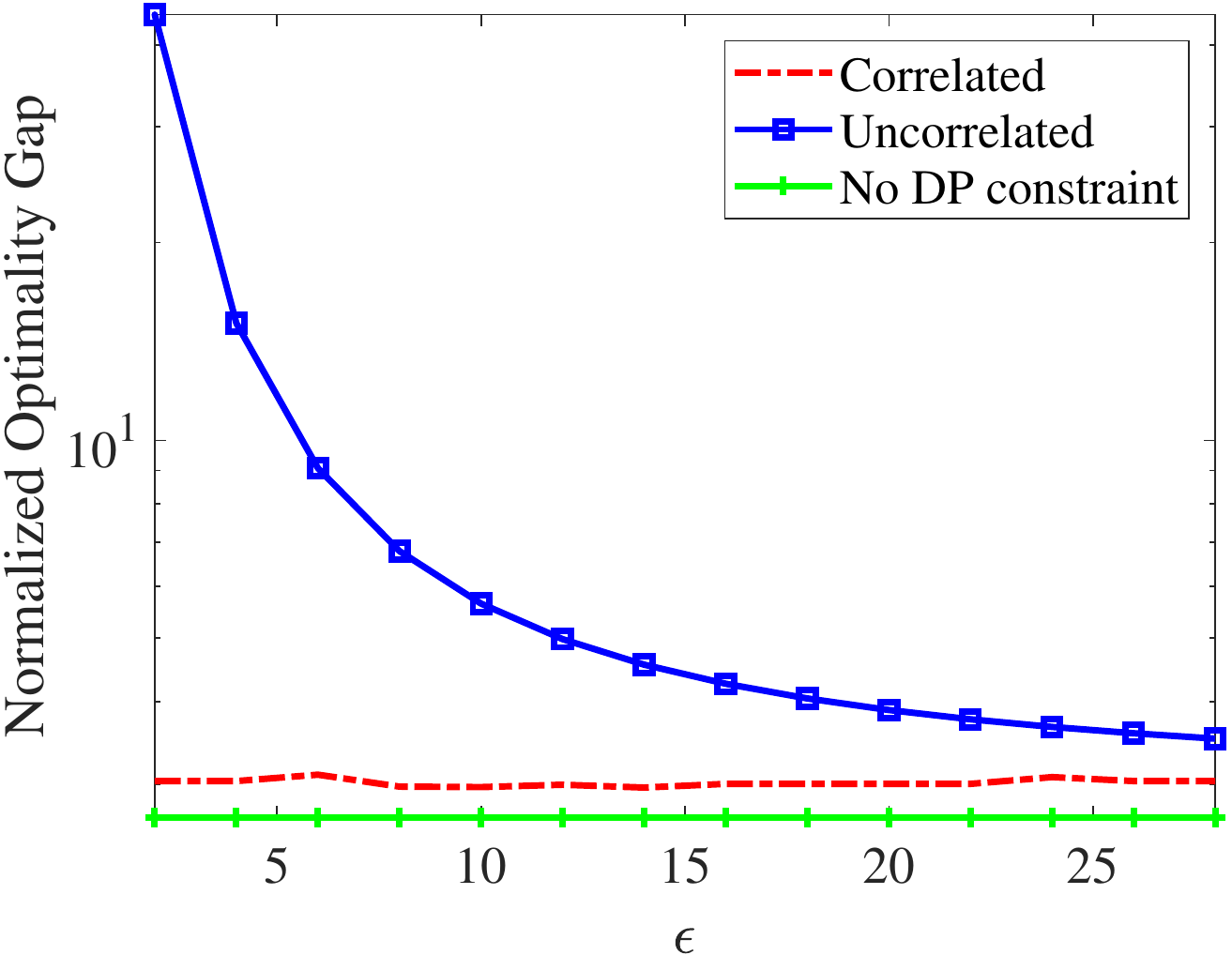}
		%\vspace{-3mm}
		\caption{Optimality gap comparison under different privacy levels ($\text{SNR}=10dB, \delta=0.01$).}\label{pcn:optGapE}  
		% \vspace{-3mm}
	\end{figure}
	
	\begin{figure}[t] 
		\centering
		\includegraphics[width=7.cm]{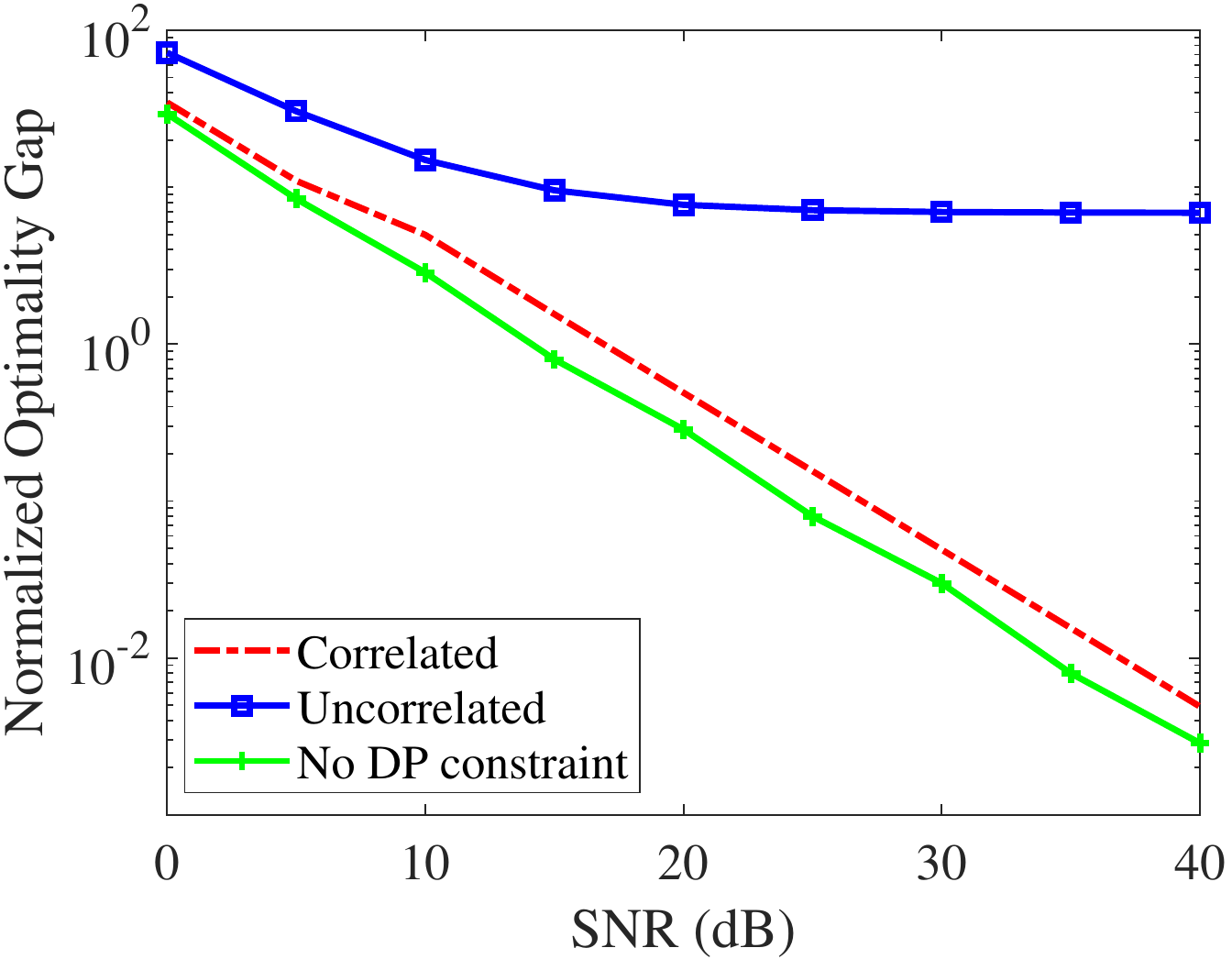}
		% \vspace{-3mm}
		\caption{Optimality gap versus SNR for different perturbation approaches ($\epsilon =5, \delta=0.01$).}\label{pcn:optGapSNR}  
		%	 \vspace{-3mm}
	\end{figure}
	
	\begin{figure}[t] 
		\centering
		\includegraphics[width=7.cm]{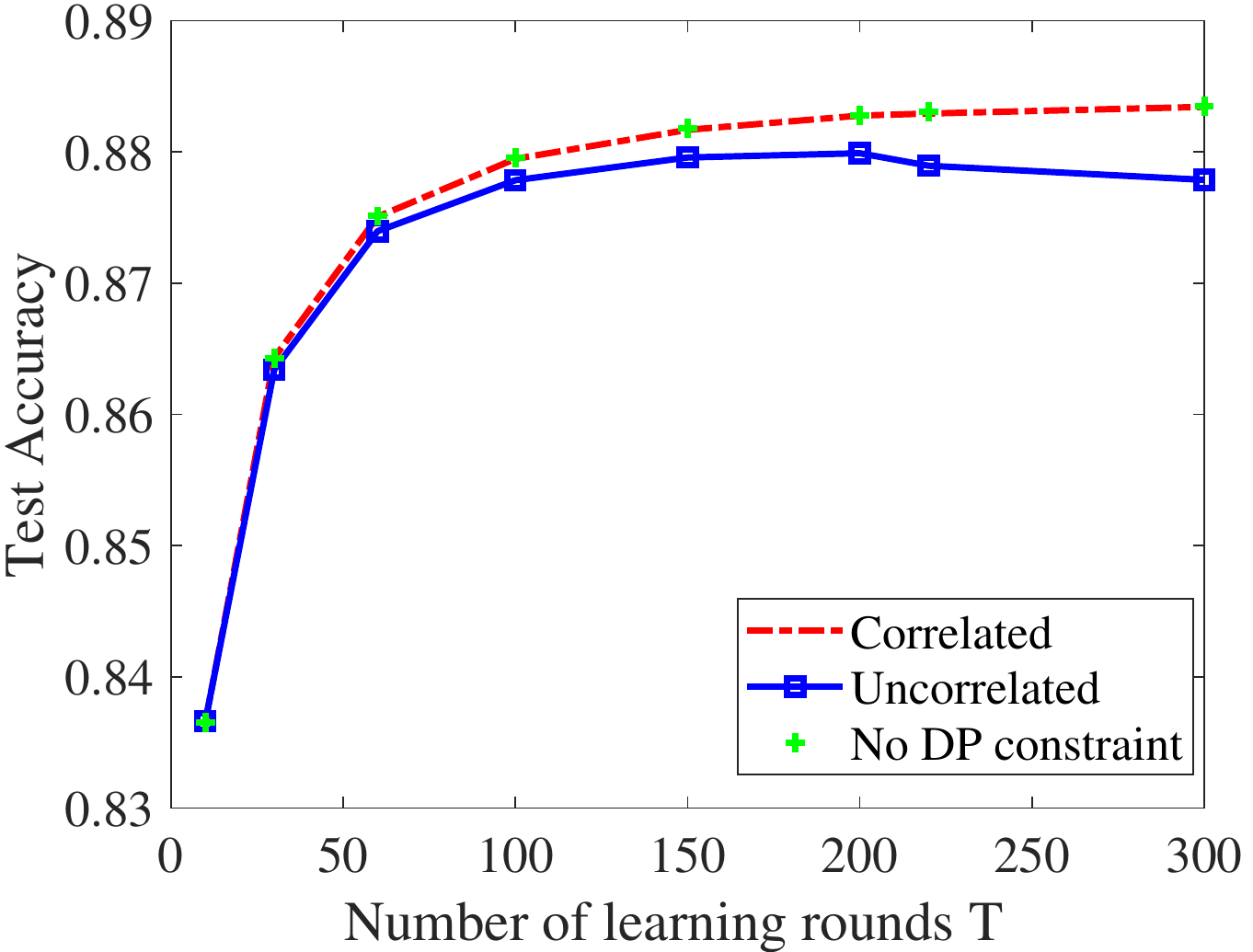}
		%\vspace{-3mm}
		\caption{Test accuracy comparison with MNIST dataset. ($\text{SNR}=5dB, \epsilon=5, \delta=0.01$).}\label{pcn:TestAccu}  
		%\vspace{-3mm}
	\end{figure}
	The test dataset contains $D_\text{tot}=10000$ samples, with model size $d=10$, data points ${\boldsymbol \mu} \overset{i.i.d.}{\sim}{\cal N}(0, {\bf I}_d)$, and labels $\nu={\boldsymbol \mu}(2)+3{\boldsymbol \mu}(5)+0.2z_o$ where $z_o \overset{i.i.d.}{\sim} {\cal N}(0,1)$ are the observation noises. We distribute the dataset evenly across $K=10$ users. The loss function is $f(\vecw, {\boldsymbol \mu}, \nu)=\frac{1}{2} \lVert \vecw^T{\boldsymbol \mu}-\nu \rVert^2+\zeta \lVert \vecw \rVert^2$ with $\zeta=0.5 \times 10^{-4}$. Parameters $\mu$ and $L$, are computed as the smallest and largest eigenvalues of data Gramian matrix $\Xi \triangleq {\bf U}^T{\bf U}+2D_\text{tot}\zeta{\bf I}$, where ${\bf U}=[{\boldsymbol \mu}_1,\dots,{\boldsymbol \mu}_{D_\text{tot}}]^T$ is a data matrix. The optimal solution is  ${\vecw^*}=\Xi^{-1}{\bf U}^T{\boldsymbol \nu}$, where ${\boldsymbol \nu}=[\nu_1,\dots,\nu_{D_\text{tot}}]^T$ is the label vector. The upper bounds of the local and global gradients are $\gamma^\st=2W \max_{({\boldsymbol \mu},\nu) \in {\cal D}} L({\boldsymbol \mu},\nu)$ and $G^\st_k=2WL_k$, where $W$ is an upper bound on $\Vert \vecw \Vert$; and $L({\boldsymbol \mu},\nu)$ and $L_k$ are the PL constants of $f(\vecw, {\boldsymbol \mu}, \nu)$ and $F_k(\vecw)$. We consider uniform privacy budget allocation in simulations, i.e., $\phi^\st=1/T, \forall t \in {\cal T}$.

	The wireless channel is modeled under Rice fading \cite{wang221federated},  
	and we set the line-of-sight (LoS) component to be 1. The channel coefficient can be expressed as
	\begin{equation*}
		h = \sqrt{\frac{\kappa }{1 + \kappa}} + \sqrt {\frac{1}{{1 + \kappa }}} \varrho, \end{equation*}
	where $\kappa$ is the Rician factor and $\varrho$ is the non-line-of-sight (NLoS) component obtained via  auto-regression: $\varrho^\st=\theta \varrho^{(t-1)}+\sqrt{1-\theta^2} \varphi^\st$. Here $\theta$ is the correlation coefficient and $\varphi^\st \overset{i.i.d.}{\sim}{\cal CN}(0,I)$ is an innovation process. The channel coefficients are given by $h_k=\varrho_s$ and $g_k=\varrho_a$, where the Rician factors are $\kappa_s=5$ and $\kappa_a=0$. The parameter $\theta$ is set to $0$ for simplicity since we assume perfect channel state information at the users. 
	
	Fig. \ref{pcn:optGapE} shows how the normalized optimality gap $[F(\vecw^{(T+1)})-F({\vecw^*})]/F({\vecw^*})$ varies with the DP parameter $\epsilon$. We set $\text{SNR}=10$dB, $\delta=0.01$, and we consider $T=30$ communication rounds. The results are averaged over 100 channel realizations. In the considered range of $\epsilon$, the correlated perturbation approach performs  approximately at the same level as the non-perturbation case and it is robust against different privacy levels. This shows that our proposed mechanism can guarantee both privacy and accuracy. In contrast, the uncorrelated perturbation approach shows an apparent compromise in convergence performance, especially with high privacy levels (smaller $\epsilon$). The same observation can be made in Fig. \ref{pcn:optGapSNR}, where we fix the DP level at $(5, 0.01)$ and then test the impacts of different SNR values on the optimality gap. Moreover, we observe a saturation trend for the uncorrelated perturbation approach in high SNR regime while this issue is resolved with our correlated perturbation approach.

	We then test our approach on the MNIST dataset via multinomial logical regression with cross-entropy loss function and quadratic regularization. 
	%\begin{align*}&\hspace {-.5pc}F({\mathbf {w}})=\frac {1}{D_{\sf tot}} \sum _{({\mathbf {u}},v)\in \mathcal {D} } \sum _{c=1}^{C} {\mathbf{1}}\{\nu=c\}\log \frac {\exp ({\mathbf {w}}_{c} ^{\sf T} {\mathbf {u}})}{ \sum _{j=1}^{C}\exp ({\mathbf {w}}_{j} ^{\sf T} {\mathbf {u}})} \\&\qquad\qquad\qquad\qquad\qquad\qquad\qquad\qquad\qquad\,\displaystyle { +\,\lambda \sum _{c=1}^{C}\| {\mathbf {w}}_{c}\|^{2},} \end{align*} 
	There are $D_\text{tot}=60000$ data samples composed of $C=10$ classes of handwritten digits. The original gradient data with dimension $d=784$, is pre-quantized into a manifold of lower dimension $30$ via principal component analysis (PCA) \cite{hein05intrinsic}. We set $\zeta=0.01$, $\Vert \vecw \Vert \le 10$, $\gamma^\st \le 50$, $\mu=0.3$ and $L=2.5$. In high privacy level and low SNR setting, e.g., $\epsilon=5, \delta=0.01$ and $\text{SNR}=5$dB, Fig. \ref{pcn:TestAccu} shows the test accuracy versus the value of communication rounds. It can be observed that the correlated perturbation approach provides higher test accuracy than the uncorrelated perturbation approach. Moreover, it approaches the performance of the non-perturbation case which clearly cannot provide any privacy guarantee.

	\section{Conclusions}\label{sec:con} 
	In this paper, we proposed a privacy-preserving design for OtA FL using correlated perturbations in the uplink transmission of gradient updates from distributed users to an edge server. The correlated perturbations provide privacy protection against an adversary node who intends to overhear the transmitted gradient vectors. In the meantime, our proposed design does not significantly compromise the learning accuracy as the aggregated perturbations add up to zero at the edge server. Based on theoretical analysis and numerical results of the SNR/SINR of the received updates, DP privacy, and convergence performance, we validated that our correlated perturbation design in OtA FL provides a good balance between privacy and learning performance as compared to the traditional methods with uncorrelated perturbations.  
	
	\section*{Appendix}
	%\subsection{Proof of Theorem~\ref{Theo:DPadv}}
	The proof can be obtained referring to [Lemma 1, \cite{Osvaldo21}] and [Theorem 3.20, \cite{dwork2014algorithmic}] by taking into account the phase shifts of the channels and the effective channel gains at the adversary. %Details are omitted in the extended abstract. 
	
	We recall the DP loss given in \eqref{pcn:dploss}, and then leverage the statistics of the effective noise ${\bf r}^\st \sim {\cal CN}(0, (m^\st)^2 {\bf I_d})$ %with the PDF given by \cite{Shi22Ris}
	%	\be 
	%	\mathrm P({\bf r}^\st)=\frac{1}{\pi^d \det((m^\st)^2 {\bf I_d})}\exp    \left(\frac{\lVert {\bf r}^\st \rVert^2}{(m^\st)^2}\right),
	%	\ee	
	such that the DP loss can be reformulated into 		  
	\begin{multline*} 
		\!\!\!\!{\cal L}_{\caD,\caD'}(\vecy_a) 
		\!\!=\!\!\sum_{t=1}^T \ln  \frac{\exp\big(\frac{-\lVert \vecy_a^\st- \sum_{k=1}^{K} \sqrt{\eta^\st} \rho_k^\st \greFD \rVert^2}{(m^\st)^2} \big)}{\exp\big(\frac{-\lVert \vecy_a^\st- \sum_{k=1}^{K} \sqrt{\eta^\st} \rho_k^\st \greFDd \rVert^2}{(m^\st)^2}\big)}\\
		%=\sum_{t=1}^T \ln \frac{\exp\big(-\frac{\lVert {\bf r}^\st \rVert^2}{2(m^\st)^2} \big)}{\exp\big(-\frac{\lVert {\bf r}^\st+{\bf v}^\st \rVert^2}{2(m^\st)^2}\big)}.\\
		\overset{(a)}{=}\sum_{t=1}^T\! \frac{\lVert {\bf r}^\st\!+\!{\bf v}^\st \rVert^2-\!\lVert {\bf r}^\st \rVert^2}{(m^\st)^2}  
		\!=\!\sum_{t=1}^T\! \frac{2 \Re \{{{\bf r}^\st}^H \!{\bf v}^\st\}\!+\!\lVert {\bf v}^\st \rVert^2}{(m^\st)^2}  
	\end{multline*}
	where $(a)$ is derived using the predefined difference vector of the received gradients given neighboring global datasets ${\bf v}^\st$. Substituting the DP loss to $(\epsilon,\delta)$-DP condition, we get	
	\begin{multline*}
		\mathrm P\left(\Bigg |\sum_{t=1}^T \frac{2 \Re \{{{\bf r}^\st}^H {\bf v}^\st\}+\lVert {\bf v}^\st \rVert^2}{(m^\st)^2} \Bigg| >  \epsilon \right) \\
		\le \mathrm P\left(\Bigg |\sum_{t=1}^T \frac{2 \Re \{{{\bf r}^\st}^H {\bf v}^\st\} }{(m^\st)^2} \Bigg| >  \epsilon - \sum_t^T  \frac{\lVert {\bf v}^\st \rVert^2}{(m^\st)^2}\right)\\
		\overset{(b)}{\le}  \frac{2\sqrt{2 \left(\sum_{t} \left(\frac{\Delta^\st}{m^\st}\right)^2\right)}}{\sqrt{2\pi} \bigg[\epsilon-\sum_{t} \left(\frac{\Delta^\st}{m^\st}\right)^2\bigg]}  
		\exp \left(\frac{-\bigg[\epsilon-\sum_{t} \left(\frac{\Delta^\st}{m^\st}\right)^2\bigg]^2}{4 \left(\sum_{t=1}^{T} \left(\frac{\Delta^\st}{m^\st}\right)^2\right)} \right).
	\end{multline*}
	We get $(b)$ utilizing $\Re \{{{\bf r}^\st}^H {\bf v}^\st\} \sim  {\cal CN}(0, {(\Delta^\st m^\st)}^2/2 {\bf I})$ and the inequality of Gaussian distribution $  x \sim {\cal N}(0, \sigma^2 {\bf I})$, i.e., $\mathrm P(x > s) \le \frac{1}{\sqrt{2\pi}\sigma} \int_{s}^{\infty} \frac{x}{s} \exp(\frac{-x^2}{2\sigma^2})dx=\frac{\sigma}{\sqrt{2\pi}s}\exp(\frac{-s^2}{2\sigma^2})$. 
	Let $q=\frac{\epsilon-\tau}{2\sqrt{\tau}}, \tau=\sum_{t=1}^{T} \left(\frac{\Delta^\st}{m^\st}\right)^2$, $(\epsilon,\delta)$-DP condition is 
	
	\be \label{DPya}
	\mathrm P(\lvert {\cal L}_{\caD,\caD'}(\vecy_a) \rvert > \epsilon) \le \frac{1}{q \sqrt{\pi}} \exp(-q^2) < \delta.
	\ee
	For briefness, we let $C(x)=\sqrt{\pi} x \exp(x^2)$. \eqref{DPya} simplifies to  
	
	\be \label{DPyar}
	\tau < \bigg(\sqrt{\epsilon+\left(C^{-1}\big(1/\delta \big)\right)^2}-C^{-1}\big(1/\delta \big) \bigg)^2.
	\ee
	The conclusion in \eqref{DPyaf} can be easily obtained from \eqref{DPyar}.

	\bibliographystyle{IEEEtran}	
	\bibliography{PrivacyCorrelatedNoiseRef}
	
\end{document}